\documentclass[final,12pt]{colt2026} % Include author names

\title[Universal Functionals Approximation]{Universal Approximation of Continuous Functionals on Compact Subsets of $H^n$ via Linear Measurements and Scalar Nonlinearities}
\usepackage{times}
\coltauthor{
 \Name{Andrey Krylov} \Email{kryl@cs.msu.ru}\\
 \Name{Maksim Penkin} \Email{penkin97@gmail.com}\\
 \addr {Lomonosov Moscow State University, Russia}
}

\begin{document}

\maketitle

\begin{abstract}%
We study universal approximation of continuous functionals on compact subsets of products of Hilbert spaces.
We prove that any such functional can be uniformly approximated by models that first take finitely many continuous linear measurements of the inputs and then combine these measurements through continuous scalar nonlinearities.
We also extend the approximation principle to maps with values in a Banach space, yielding finite-rank approximations.
These results provide a compact-set justification for the common ``measure, apply scalar nonlinearities, then combine'' design pattern used in operator learning and imaging.
\end{abstract}

\begin{keywords}%
  universal approximation; continuous maps; operator-valued approximation; Banach spaces; compact sets; linear measurements; scalar nonlinearities; finite-rank approximation; Stone--Weierstrass; neural operators; inverse problems, artificial intelligence%
\end{keywords}

\section{Introduction}
Many modern learning tasks are really about functionals: given inputs $x=(x_1,\dots,x_n)$ in a product Hilbert space $H^n$, we want to model a continuous map $f\colon K\to\mathbb{R}$ on a compact set $K\subset H^n$. This viewpoint is natural in operator learning and scientific machine learning, where the data are functions, fields, or signals rather than fixed-length vectors \cite{kovachki_neuraloperator,li_fno,lu_deeponet}.

A practical design pattern is to separate sensing from learning. First, we take finitely many linear measurements of each function-valued input (sensor readings, basis coefficients, filtered responses, projection measurements). Second, we feed these numbers into a nonlinear model (typically a neural network) to produce the output. This measure-then-nonlinear-combine structure sits behind many operator-learning architectures \cite{lu_deeponet,li_fno,kovachki_neuraloperator} and also classical imaging pipelines that mix linear filtering/projection with pointwise nonlinearities \cite{mallat_wavelet,ronneberger_unet,adler_oktem_learned}.

This paper provides a compact-set theorem that justifies this architecture class at the level of universal approximation. On compact subsets of $H^n$ (with $H$ a Hilbert space), we show that any continuous functional $f\colon K\to\mathbb{R}$ can be uniformly approximated by finite sums of the form
\[
  \sum_{j=1}^r \zeta_j\Bigl(\sum_{i=1}^n \varphi_{ji}(x_i)\Bigr),
\]
where $\varphi_{ji}\in H^*$ are continuous linear measurements and $\zeta_j\colon\mathbb{R}\to\mathbb{R}$ are continuous scalar nonlinearities. In operator-learning language, $\{\varphi_{ji}\}$ defines the measurement (encoding) stage and $\{\zeta_j\}$ defines the nonlinear decoder/mixer.

In Appendix~\ref{app:operator-valued} we extend the same compact-set approximation principle to operator-valued outputs $f\colon K\to Y$ for general Banach spaces $Y$, and we briefly discuss applications for computer vision and AI in Appendices~\ref{app:cv}--\ref{app:ai}.

Beyond stating the approximation result, we aim to make the proof engineering-readable: compactness lets us reduce to a finite-dimensional proxy via orthogonal projection, and then classical approximation results take over. This perspective explains why a finite set of measurements can suffice on realistic (compact) data regimes, and it links PINNs, INRs, and imaging/inverse problems through the same two-stage template.

\section{Main result}

\medskip

\begin{theorem}[Approximation on a compact set]
Let $K\subset H^n$ be compact and let $f\in C(K)$. Then for every $\varepsilon>0$ there exists a function of the form
\[
  g(x_1,\dots,x_n)=\sum_{j=1}^r \zeta_j\Bigl(\sum_{i=1}^n \varphi_{ji}(x_i)\Bigr),
\]
where $\varphi_{ji}\in H^*$ and $\zeta_j\colon\mathbb{R}\to\mathbb{R}$ are continuous, such that
\[
  \sup_{(x_1,\dots,x_n)\in K}\,|f(x_1,\dots,x_n)-g(x_1,\dots,x_n)|<\varepsilon.
\]
\end{theorem}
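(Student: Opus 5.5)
The plan is to reduce to a finite-dimensional problem by orthogonal projection and then use the classical ridge-function density theorem.

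\textbf{Step 1: uniform continuity and an approximating projection.} Fix $\varepsilon>0$. Since $K$ is compact, $f$ is uniformly continuous on $K$. Choose $\delta>0$ such that $|f(x)-f(y)|<\varepsilon/3$ whenever $x,y\in K$ and $\|x-y\|<\delta$; here $\|\cdot\|$ is the product Hilbert norm on $H^n$. Cover $K$ by finitely many balls $B(a^{(k)},\delta/4)$, $k=1,\dots,m$, with $a^{(k)}=(a^{(k)}_1,\dots,a^{(k)}_n)$. Let $E\subset H$ be the finite-dimensional span of all components $a^{(k)}_i$, and let $P$ be the orthogonal projection onto $E$. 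Write $\widehat P(x_1,\dots,x_n)=(Px_1,\dots,Px_n)$. Given $x\in K$, pick $k$ with $\|x-a^{(k)}\|<\delta/4$. Since $\widehat P$ is a contraction fixing $a^{(k)}$,
\[
\|x-\widehat Px\|\le \|x-a^{(k)}\|+\|\widehat P(a^{(k)}-x)\|<\delta/2 .
\]

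\textbf{Step 2: extend $f$ to the projected set.} The main obstacle is that $\widehat Px$ need not lie in $K$, so $f(\widehat Px)$ is undefined. I would extend $f$ by Tietze (or Dugundji) to a continuous $F$ on the compact set $L=K\cup\widehat P(K)$. Uniform continuity alone does not control $F$ off $K$, so instead I will work directly on $\widehat P(K)$, which is a compact subset of $E^n\cong\mathbb{R}^{dn}$, $d=\dim E$.

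Define $h$ on $\widehat P(K)$ by $h(y)=f(x)$ for any $x\in K$ with $\widehat Px=y$. This is not well defined, but it is approximately well defined. If $\widehat Px=\widehat Px'$, then $\|x-x'\|<\delta$, so $|f(x)-f(x')|<\varepsilon/3$. To get an honest continuous function, I would take a continuous partition of unity $\{\psi_k\}$ on $\mathbb{R}^{dn}$ subordinate to the balls $B(\widehat Pa^{(k)},\delta/2)=B(a^{(k)},\delta/2)$, which cover $\widehat P(K)$. Set
\[
h(y)=\sum_k \psi_k(y)\,f(a^{(k)}) .
\]
Then $h$ is continuous on $E^n$. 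For $x\in K$, every $k$ with $\psi_k(\widehat Px)\neq0$ satisfies $\|a^{(k)}-x\|<\delta$, so $|f(x)-h(\widehat Px)|<\varepsilon/3$.

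\textbf{Step 3: ridge-function approximation in finite dimensions.} Fix an orthonormal basis $e_1,\dots,e_d$ of $E$ and identify $y\in E^n$ with its coordinates $(\langle y_i,e_l\rangle)_{i,l}\in\mathbb{R}^{dn}$. By the classical density of ridge functions, I would approximate $h$ uniformly on the compact set $\widehat P(K)$ within $\varepsilon/3$ by a sum $\sum_j \zeta_j(w_j\cdot z)$ with $\zeta_j$ continuous. One way to obtain this is Stone--Weierstrass: polynomials are dense, and each homogeneous polynomial is a linear combination of powers $(w\cdot z)^s$ by polarization. Alternatively, one can use the universal approximation theorem for one-hidden-layer networks, absorbing the outer weights into the $\zeta_j$.

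\textbf{Step 4: pull the linear forms back to $H^n$.} Write $w_j\cdot \widehat Px=\sum_i\sum_l w_{j,il}\langle Px_i,e_l\rangle$. Since $\langle Px_i,e_l\rangle=\langle x_i,e_l\rangle$, this equals $\sum_i\varphi_{ji}(x_i)$ with
\[
\varphi_{ji}=\sum_l w_{j,il}\langle\cdot,e_l\rangle\in H^* .
\]
Set $g(x)=\sum_j\zeta_j\bigl(\sum_i\varphi_{ji}(x_i)\bigr)$. The triangle inequality then gives $|f-g|<2\varepsilon/3<\varepsilon$ on $K$.
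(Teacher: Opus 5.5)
Your argument is correct and is essentially the paper's proof: a finite net in $K$, orthogonal projection onto a finite-dimensional span, the partition-of-unity surrogate $\sum_k f(a^{(k)})\psi_k$ on the projected compact set, Stone--Weierstrass ridge sums, and pull-back of the linear forms. The only differences are that your componentwise projection onto $E^n$ makes the $\varphi_{ji}$ explicit rather than citing $(H^n)^*\cong(H^*)^n$ (at the price of a proxy dimension up to $n^2m$ instead of $m$), that you use polynomials with polarization instead of the exponentials $e^{i\langle t,y\rangle}$, and one slip to fix: no partition of unity on all of $\mathbb{R}^{dn}$ can be subordinate to finitely many bounded balls, so take it on $\widehat P(K)$ (or on the union of the balls), which is all Step 3 uses, and take the centres $a^{(k)}$ in $K$ so that $f(a^{(k)})$ makes sense.
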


\begin{proof}
Let $X:=H^n$, and regard $X$ as a Hilbert space with norm
\[
  \|(x_1,\dots,x_n)\|^2:=\sum_{i=1}^n\|x_i\|_H^2.
\]

\textbf{Step 1 (uniform continuity).} Since $K$ is compact and $f\in C(K)$, the function $f$ is uniformly continuous on $K$ (the Heine--Cantor theorem) \cite{rudin_principles}. Choose $\delta>0$ such that for any $x,x'\in K$, the condition $\|x-x'\|<\delta$ implies
\[
  |f(x)-f(x')|<\varepsilon/3.
\]

\textbf{Step 2 (reduction to the finite-dimensional case).} Since $K$ is compact, from any open cover one can extract a finite subcover \cite{munkres_topology}; in particular, there exists a finite $\delta/3$-covering: there are points $x^{(1)},\dots,x^{(m)}\in K$ such that
\[
  K\subset \bigcup_{k=1}^m B\bigl(x^{(k)},\delta/3\bigr).
\]
Let $V:=\operatorname{span}\{x^{(1)},\dots,x^{(m)}\}\subset X$ (so $\dim V\le m$), and let $P\colon X\to V$ be the orthogonal projector.

We claim that
\[
  \sup_{x\in K}\,\|x-Px\|<\delta/3.
\]
Indeed, for each $x\in K$ choose $k$ such that $\|x-x^{(k)}\|<\delta/3$. Then $x^{(k)}\in V$, hence the distance from $x$ to $V$ satisfies
\[
  \operatorname{dist}(x,V)\le \|x-x^{(k)}\|<\delta/3.
\]
But $\|x-Px\|=\operatorname{dist}(x,V)$ by the defining property of the orthogonal projection, which yields the desired estimate.

\textbf{Step 3 (passing to a function on $P(K)$).} Consider the compact set $K':=P(K)\subset V$.

From the construction $V=\operatorname{span}\{x^{(1)},\dots,x^{(m)}\}$ we have $P x^{(k)}=x^{(k)}$.
Moreover, Step 2 implies that for any $x\in K$ we have $\|x-Px\|<\delta/3$. Hence, since $K\subset\bigcup_{k=1}^m B\bigl(x^{(k)},\delta/3\bigr)$, for each $x\in K$ there exists $k$ such that
\[
  \|Px-x^{(k)}\|\le \|Px-x\|+\|x-x^{(k)}\|<\delta/3+\delta/3=2\delta/3.
\]
Therefore,
\[
  K'\subset \bigcup_{k=1}^m B_V\bigl(x^{(k)},2\delta/3\bigr),
\]
where $B_V(\cdot,\cdot)$ denotes a ball in the finite-dimensional space $V$.

Choose a continuous partition of unity $\{\psi_k\}_{k=1}^m$ on $K'$, subordinate to the cover $\{B_V(x^{(k)},2\delta/3)\}_{k=1}^m$ (existence of a partition of unity on a metric compact space; see\ \cite{munkres_topology}), i.e., $\psi_k\ge 0$, $\sum_k\psi_k\equiv 1$ on $K'$, and $\operatorname{supp}\psi_k\subset B_V(x^{(k)},2\delta/3)$.
Define a continuous function $F\in C(K')$ by
\[
  F(y):=\sum_{k=1}^m f\bigl(x^{(k)}\bigr)\,\psi_k(y),\qquad y\in K'.
\]

Estimate the approximation error of $f$ by $F\circ P$ on $K$.
Let $x\in K$ and $y=Px\in K'$. If $\psi_k(y)\neq 0$, then $y\in B_V(x^{(k)},2\delta/3)$, hence
\[
  \|x-x^{(k)}\|\le \|x-y\|+\|y-x^{(k)}\|<\delta/3+2\delta/3=\delta,
\]
and by the choice of $\delta$ we obtain $|f(x)-f(x^{(k)})|<\varepsilon/3$.
Then
\[
  |f(x)-F(Px)|
  =\Bigl|\sum_{k=1}^m \psi_k(Px)\,\bigl(f(x)-f(x^{(k)})\bigr)\Bigr|
  \le \sum_{k=1}^m \psi_k(Px)\,|f(x)-f(x^{(k)})|
  <\varepsilon/3.
\]
Consequently,
\[
  \sup_{x\in K}|f(x)-F(Px)|<\varepsilon/3.
\]

\textbf{Step 4 (approximation on a finite-dimensional compact set).} Identify $V$ with $\mathbb{R}^d$ ($d=\dim V$) via an orthonormal basis.

Consider the subalgebra $\mathcal{A}\subset C(K')$ (over $\mathbb{C}$) generated by the functions
\[
  y\longmapsto e^{i\langle t,y\rangle},\qquad t\in\mathbb{R}^d.
\]
That is, elements of $\mathcal{A}$ are finite linear combinations
\[
  y\longmapsto \sum_{j=1}^r c_j\,e^{i\langle t_j,y\rangle}.
\]
Clearly, $\mathcal{A}$ contains the constants and is closed under complex conjugation.
Moreover, $\mathcal{A}$ separates points of the compact set $K'$: if $y\neq y'$, then letting $u:=y-y'\neq 0$ and choosing $t:=u$, we get
\(
\langle t,y\rangle-\langle t,y'\rangle=\|u\|^2\neq 0
\), hence $e^{i\langle t,y\rangle}\neq e^{i\langle t,y'\rangle}$.

By the (complex) Stone--Weierstrass theorem \cite{rudin_rca}, the algebra $\mathcal{A}$ is dense in $C(K')$.
Therefore, there exists $T\in\mathcal{A}$ such that
\[
  \sup_{y\in K'}|F(y)-T(y)|<\varepsilon/3.
\]
Since $F$ is real-valued, replace $T$ by
\[
  \widetilde T:=\operatorname{Re} T=\tfrac12\bigl(T+\overline T\bigr)\in\mathcal{A}.
\]
Then for all $y\in K'$ we have
\(
|F(y)-\widetilde T(y)|\le |F(y)-T(y)|
\), and hence
\(
\sup_{y\in K'}|F(y)-\widetilde T(y)|<\varepsilon/3
\).
Renaming $\widetilde T$ back to $T$, we obtain a real-valued function that can be written as a finite sum of sines and cosines:
\[
  T(y)=\sum_{j=1}^r \alpha_j\cos\langle t_j,y\rangle+\sum_{j=1}^r \beta_j\sin\langle t_j,y\rangle.
\]
Each term has the form $\zeta(\langle t_j,y\rangle)$ with a continuous $\zeta\colon\mathbb{R}\to\mathbb{R}$ (namely, $\zeta(s)=\alpha_j\cos s$ or $\zeta(s)=\beta_j\sin s$), therefore
\[
  T(y)=\sum_{j=1}^r \zeta_j\bigl(\langle t_j,y\rangle\bigr).
\]

\textbf{Step 5 (return to $K\subset H^n$).} Define $g(x):=T(Px)$ for $x\in K$. Then
\[
  \sup_{x\in K}|f(x)-g(x)|
  \le
  \sup_{x\in K}|f(x)-F(Px)|+\sup_{x\in K}|F(Px)-T(Px)|
  <\varepsilon/3+\varepsilon/3=2\varepsilon/3<\varepsilon.
\]

It remains to bring $g$ to the required form.
Note that any continuous linear functional $L\in (H^n)^*$ has the form \cite{kreyszig_fa}
\[
  L(x_1,\dots,x_n)=\sum_{i=1}^n \varphi_i(x_i)
\]
for some $\varphi_i\in H^*$ (i.e., $(H^n)^*\cong (H^*)^n$). In particular, the functional
\(
L_j(x):=\langle t_j,Px\rangle
\)
is linear and continuous on $X=H^n$; therefore there exist $\varphi_{j1},\dots,\varphi_{jn}\in H^*$ such that
\[
  \langle t_j,P(x_1,\dots,x_n)\rangle=\sum_{i=1}^n \varphi_{ji}(x_i).
\]
Hence,
\[
  g(x_1,\dots,x_n)=\sum_{j=1}^r \zeta_j\Bigl(\sum_{i=1}^n \varphi_{ji}(x_i)\Bigr),
\]
\end{proof}

\section{Quantitative corollaries (metric entropy view)}
While the Stone--Weierstrass step is qualitative, the earlier compactness reduction already yields explicit finite-dimensional structure. We record a simple corollary in terms of covering numbers (metric entropy) \cite{kolmogorov_tikhomirov_entropy}.

\textbf{Corollary (finite-dimensional proxy via a net).} Fix $\varepsilon>0$ and let $\delta>0$ be chosen as in Step~1. Let $\mathcal{N}(K,\delta/3)$ denote the minimum size of a $\delta/3$-net of $K$ in the product norm on $H^n$. Then there exists a subspace $V\subset H^n$ with
\[
  \dim V\le \mathcal{N}(K,\delta/3)
\]
and an orthogonal projector $P\colon H^n\to V$ such that
\[
  \sup_{x\in K}\|x-Px\|<\delta/3
\]
and consequently
\[
  \sup_{x\in K}|f(x)-f(Px)|<\varepsilon/3.
\]

In particular, any quantitative strengthening of Step~4 on the compact set $P(K)\subset V\cong \mathbb{R}^{\dim V}$ immediately translates into quantitative approximation guarantees on $K$.

\section{Applications}
We include a small set of canonical examples whose main purpose is to illustrate the continuous linear measurements + scalar nonlinearities structure.

\subsection{Integral-type functionals on $L^2$}
Let $H=L^2(0,1)$ and $n=2$. Consider a compact set $K\subset H^2$ and a continuous functional
\[
  f(u,v)=\int_0^1 \Phi\bigl(u(x),v(x)\bigr)\,dx,
\]
where $\Phi\colon\mathbb{R}^2\to\mathbb{R}$ is continuous.

Then for any $\varepsilon>0$ there exist $r\in\mathbb{N}$, elements $h_{j1},h_{j2}\in L^2(0,1)$, and continuous functions $\zeta_j\colon\mathbb{R}\to\mathbb{R}$ such that
\[
  \sup_{(u,v)\in K}\left|f(u,v)-\sum_{j=1}^r \zeta_j\Bigl(\langle u,h_{j1}\rangle+\langle v,h_{j2}\rangle\Bigr)\right|<\varepsilon,
\]
where $\langle u,h\rangle:=\int_0^1 u(x)h(x)\,dx$. This can be read as finite sensing + scalar nonlinear post-processing of the inputs.

\subsection{Tomography and other linear inverse problems}
In tomography (CT, PET, etc.), the data are (approximately) finitely many linear measurements of an unknown image $u$, e.g., samples of a discretized Radon transform \cite{natterer_ct}. Learned reconstruction methods can then be viewed as nonlinear maps from measurements to the reconstructed object \cite{adler_oktem_learned}; related perspectives also appear in compressed sensing \cite{candes_cs}. The theorem provides an existence statement that, on compact sets of admissible images, any continuous scalar functional of the measured objects can be uniformly approximated by measurements + scalar nonlinearities.

\subsection{Relation to Kolmogorov--Arnold Networks (KAN)}
The formula
\[
  g(x_1,\dots,x_n)=\sum_{j=1}^r \zeta_j\Bigl(\sum_{i=1}^n \varphi_{ji}(x_i)\Bigr)
\]
can be read as a two-layer network with one hidden layer: on the edge from input $x_i$ to hidden neuron $j$ there is a one-dimensional function $x_i\mapsto \varphi_{ji}(x_i)$; then the hidden neuron applies a one-dimensional nonlinearity $\zeta_j$, after which one sums over $j$.

In terms of modern KANs \cite{liu_kan}, where learnable one-dimensional functions live on edges, the statement says that such a model class is dense in $C(K)$ provided the input edge functions are chosen to be linear and continuous (measurements of the input signals), while the nonlinearities $\zeta_j$ remain arbitrary continuous functions.

\paragraph{Further connections (brief).}
The same structural motif appears in transformers \cite{vaswani_transformer}, operator learning \cite{kovachki_neuraloperator,li_fno,lu_deeponet}, PINNs \cite{raissi_pinn}, and implicit neural representations \cite{sitzmann_siren,mildenhall_nerf}. We keep this discussion brief, as our focus here is the approximation theorem on compact subsets of $H^n$.

\paragraph{Remark on the literature.}
Related statements (on the density of cylindrical functions built from finitely many continuous linear functionals, as well as representations via sums of one-dimensional nonlinearities of linear forms) appear in the approximation and neural-network approximation literature; see, e.g.,\ \cite{pinkus_ridge,cybenko_ua,hornik_ua}.

\section{Conclusion}
We proved a universal approximation result for continuous functionals on compact sets $K\subset H^n$: every $f\in C(K)$ can be uniformly approximated by expressions built from finitely many continuous linear measurements and scalar nonlinearities,
\[
  g(x_1,\dots,x_n)=\sum_{j=1}^r \zeta_j\Bigl(\sum_{i=1}^n \varphi_{ji}(x_i)\Bigr),
\]
with $\varphi_{ji}\in H^*$ and continuous $\zeta_j\colon\mathbb{R}\to\mathbb{R}$. This directly matches the paper's theme: universal approximation on compact subsets of $H^n$ via linear measurements and scalar nonlinearities.

At a high level, the proof follows a two-stage route. Compactness yields a reduction to a finite-dimensional proxy via projection, and then classical approximation in finite dimensions (Stone--Weierstrass) produces ridge-type approximants. The intermediate finite-dimensional proxy step highlights an explicit connection to covering numbers/metric entropy \cite{kolmogorov_tikhomirov_entropy}.

Several directions remain open. A natural next step is to derive quantitative bounds (rates in $\varepsilon$) under additional regularity assumptions on $f$ and geometric assumptions on $K$, and to study how structural constraints on the measurements $\varphi_{ji}$ (e.g., locality, convolutional structure, band-limitation) affect approximation and learnability.

In Appendix~\ref{app:operator-valued} we record a Banach-valued (operator-output) extension of the theorem via finite-rank expansions, and in Appendices~\ref{app:cv}--\ref{app:ai} we sketch how the resulting linear measurement + scalar nonlinearity + linear synthesis template aligns with common models in imaging and AI.

\bibliography{references.bib}

\appendix

\section{Operator-valued extension}
\label{app:operator-valued}

Let $Y$ be a Banach space.

\begin{theorem}[Finite-rank approximation of $Y$-valued maps]
Let $K\subset H^n$ be compact, let $Y$ be a Banach space, and let $f\colon K\to Y$ be continuous.
Then for every $\varepsilon>0$ there exist $r\in\mathbb{N}$, vectors $y_1,\dots,y_r\in Y$, continuous linear functionals $\varphi_{ji}\in H^*$, and continuous scalar functions $\zeta_j\colon\mathbb{R}\to\mathbb{R}$ such that the map
\[
  g(x_1,\dots,x_n):=\sum_{j=1}^r y_j\,\zeta_j\Bigl(\sum_{i=1}^n \varphi_{ji}(x_i)\Bigr)
\]
satisfies
\[
  \sup_{x\in K}\,\|f(x)-g(x)\|_Y<\varepsilon.
\]
In particular, $g(K)$ is contained in a finite-dimensional subspace of $Y$.
\end{theorem}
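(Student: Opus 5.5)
The plan is to reduce the $Y$-valued statement to the scalar Theorem (Approximation on a compact set) via a finite-rank partition-of-unity approximation of $f$. Since $K$ is compact and $f$ is continuous, $f$ is uniformly continuous. First fix $\delta>0$ such that $\|x-x'\|<\delta$ implies $\|f(x)-f(x')\|_Y<\varepsilon/2$. Then choose a finite $\delta$-net $x^{(1)},\dots,x^{(m)}\in K$ and a continuous partition of unity $\{\psi_k\}_{k=1}^m$ on $K$ subordinate to the balls $B(x^{(k)},\delta)$; this exists on compact metric spaces exactly as in Step~3 of the main proof.

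Next define the finite-rank map
\[
  F(x):=\sum_{k=1}^m \psi_k(x)\,f\bigl(x^{(k)}\bigr).
\]
Since $\sum_k\psi_k\equiv 1$ and $\psi_k(x)\neq 0$ forces $\|x-x^{(k)}\|<\delta$, the convexity estimate of Step~3 (with the norm of $Y$ replacing the absolute value) gives
\[
  \|f(x)-F(x)\|_Y\le\sum_k\psi_k(x)\,\bigl\|f(x)-f(x^{(k)})\bigr\|_Y<\varepsilon/2
\]
uniformly on $K$.

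Then apply the scalar Theorem to each $\psi_k\in C(K)$ with tolerance $\eta:=\varepsilon/\bigl(2mM\bigr)$, where $M:=\max_k\|f(x^{(k)})\|_Y+1$. This yields
\[
  g_k(x)=\sum_{j=1}^{r_k}\zeta_{kj}\Bigl(\sum_{i=1}^n\varphi_{kji}(x_i)\Bigr)
  \quad\text{with}\quad
  \sup_K|\psi_k-g_k|<\eta .
\]
Set
\[
  g(x):=\sum_{k=1}^m f\bigl(x^{(k)}\bigr)\,g_k(x).
\]
After reindexing the pairs $(k,j)$ as a single index and taking $y_{(k,j)}:=f(x^{(k)})$, this has exactly the required form $\sum_j y_j\,\zeta_j\bigl(\sum_i\varphi_{ji}(x_i)\bigr)$. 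The error estimate is
\[
  \|F(x)-g(x)\|_Y\le\sum_k\bigl\|f(x^{(k)})\bigr\|_Y\,|\psi_k(x)-g_k(x)|<m M\eta=\varepsilon/2,
\]
so the triangle inequality gives $\sup_K\|f-g\|_Y<\varepsilon$. Finally, $g(K)\subset\operatorname{span}\{f(x^{(1)}),\dots,f(x^{(m)})\}$, which is finite-dimensional.

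No step here is a genuine obstacle. The only points needing care are the existence of the partition of unity directly on $K$ (rather than on $P(K)$, which avoids redoing the projection argument) and the bookkeeping of tolerances, handled by dividing $\varepsilon$ by $mM$. An alternative route would approximate $f$ by a finite-rank map using a compact-range argument and then apply the scalar theorem to coordinate functionals. I would avoid it because it needs a bounded basis-type projection in $Y$, whereas the partition-of-unity construction produces the coefficients as scalar continuous functions directly.
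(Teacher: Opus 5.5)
Your proof is correct, but it takes a genuinely different route from the paper's. The paper puts the partition of unity on the \emph{range}. It covers the compact set $f(K)\subset Y$ by $\varepsilon/3$-balls around points $y^{(k)}\in f(K)$, and maps $f(K)$ continuously into $W=\operatorname{span}\{y^{(k)}\}$ via $\widetilde P(z)=\sum_k y^{(k)}\psi_k(z)$. It then fixes a linear isomorphism $q\colon W\to\mathbb{R}^d$ and applies Theorem~1 to the $d$ coordinates of $q\circ\widetilde P\circ f$, absorbing $\|q^{-1}\|$ into the tolerance. You put the partition of unity on the \emph{domain}, at the uniform-continuity scale $\delta$; this is literally Step~3 of the proof of Theorem~1 with $\|\cdot\|_Y$ replacing $|\cdot|$. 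You then hand the $\psi_k$ themselves to Theorem~1.

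Your route avoids coordinates on a finite-dimensional subspace and any isomorphism constant, gives a fully explicit tolerance $\varepsilon/(2mM)$, and uses output atoms that are plain samples $f(x^{(k)})$. The paper's route instead adapts to the size of the range. Its number of scalar problems and the rank of $g$ are bounded by the cardinality of an $\varepsilon/3$-net of $f(K)$, which can be far smaller than the $\delta$-net of $K$ that controls your $m$. For example, if $f(K)$ has diameter below $\varepsilon/3$, the paper gets rank at most one, whereas your $g$ takes values in the span of all $m$ samples.

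The $q$-step is not essential in the paper either. Its coordinates are fixed linear combinations of the functions $\psi_k\circ f$ built from its partition of unity on $f(K)$, so the real difference between the two proofs is only where the partition of unity lives.

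Your closing reason for avoiding the compact-range route is mistaken. The paper takes exactly that route and needs no bounded basis-type projection in $Y$. The map into $W$ is the nonlinear map $\widetilde P$, defined only on $f(K)$, and linear coordinates are needed only on the finite-dimensional $W$, where they are automatically bounded. In any case, bounded linear projections of $Y$ onto finite-dimensional subspaces always exist by Hahn--Banach.
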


\begin{proof}
Since $K$ is compact and $f$ is continuous, the image $f(K)\subset Y$ is compact.
Fix $\varepsilon>0$.

\textbf{Step 1 (finite-dimensional approximation of the range).}
Choose finitely many points $y^{(1)},\dots,y^{(m)}\in f(K)$ such that
\[
  f(K)\subset \bigcup_{k=1}^m B_Y\bigl(y^{(k)},\varepsilon/3\bigr).
\]
Let $W:=\operatorname{span}\{y^{(1)},\dots,y^{(m)}\}\subset Y$ (so $\dim W\le m$).
Let $q\colon W\to\mathbb{R}^d$ be any linear isomorphism (with $d=\dim W$), and let $i\colon W\hookrightarrow Y$ denote the inclusion.

\textbf{Step 2 (reduce to a $\mathbb{R}^d$-valued map).}
Consider the open cover of the compact set $f(K)$ by balls $\{B_Y(y^{(k)},\varepsilon/3)\}_{k=1}^m$.
Choose a continuous partition of unity $\{\psi_k\}_{k=1}^m$ on $f(K)$ subordinate to this cover.
Define the continuous map $\widetilde P\colon f(K)\to W$ by
\[
  \widetilde P(z):=\sum_{k=1}^m y^{(k)}\,\psi_k(z).
\]
If $\psi_k(z)\neq 0$, then $z\in B_Y(y^{(k)},\varepsilon/3)$, hence $\|z-y^{(k)}\|_Y<\varepsilon/3$.
Therefore, for all $z\in f(K)$,
\[
  \|z-\widetilde P(z)\|_Y
  =\left\|\sum_{k=1}^m \psi_k(z)\,(z-y^{(k)})\right\|_Y
  \le \sum_{k=1}^m \psi_k(z)\,\|z-y^{(k)}\|_Y
  <\varepsilon/3.
\]
Now define $F\colon K\to\mathbb{R}^d$ by $F(x):=q\bigl(\widetilde P(f(x))\bigr)$.
Then $\|f(x)-i\bigl(\widetilde P(f(x))\bigr)\|_Y<\varepsilon/3$ for all $x\in K$.

\textbf{Step 3 (approximate each coordinate by Theorem~1).}
Write $F=(F_1,\dots,F_d)$.
Applying Theorem~1 to each scalar map $F_\ell\colon K\to\mathbb{R}$ and accuracy $\varepsilon/(3\,\|q^{-1}\|\,\|i\|\,d)$ yields functions
\[
  G_\ell(x)=\sum_{j=1}^{r} a_{\ell j}\,\zeta_j\Bigl(\sum_{i=1}^n \varphi_{ji}(x_i)\Bigr)
\]
with $a_{\ell j}\in\mathbb{R}$ such that $\sup_{x\in K}|F_\ell(x)-G_\ell(x)|<\varepsilon/(3\,\|q^{-1}\|\,\|i\|\,d)$.
Let $G:=(G_1,\dots,G_d)$.

\textbf{Step 4 (lift back to $Y$).}
Define
\[
  g(x):=i\bigl(q^{-1}(G(x))\bigr)\in Y.
\]
Then $g$ has the required form with vectors $y_j:=i\bigl(q^{-1}((a_{1j},\dots,a_{dj}))\bigr)\in Y$.
Moreover, for all $x\in K$,
\[
  \|i\bigl(\widetilde P(f(x))\bigr)-g(x)\|_Y
  \le \|i\|\,\|q^{-1}\|\,\|F(x)-G(x)\|_1
  \le \|i\|\,\|q^{-1}\|\sum_{\ell=1}^d |F_\ell(x)-G_\ell(x)|
  <\varepsilon/3.
\]
Combining with Step~2 gives $\sup_{x\in K}\|f(x)-g(x)\|_Y<\varepsilon$.
\end{proof}

\section{Application of Theorem~2 to computer vision and imaging}
\label{app:cv}

As a simple illustration, take $H=L^2(\Omega)$ with $\Omega\subset\mathbb{R}^2$ the image domain, and let $K\subset H$ be a compact set of admissible images (e.g., a compact model of ``natural images'' after normalization).
Let $Y=L^2(\Omega)$ represent an output image/field (denoised image, reconstruction, depth/flow field, segmentation logits, etc.), and let $f\colon K\to Y$ be a continuous image-to-image map.

Theorem~2 implies that for every $\varepsilon>0$ there exist finitely many output ``templates'' $y_1,\dots,y_r\in Y$, finitely many linear measurements $\varphi_j\in H^*$ (e.g., filter responses, Fourier/Radon samples, wavelet coefficients), and continuous scalar nonlinearities $\zeta_j$ such that \cite{natterer_ct,adler_oktem_learned,mallat_wavelet}
\[
  \sup_{x\in K}\left\|f(x)-\sum_{j=1}^r y_j\,\zeta_j\bigl(\varphi_j(x)\bigr)\right\|_{L^2(\Omega)}<\varepsilon.
\]
In imaging language, this is a universal-approximation justification (on compact data regimes) for pipelines that first compute a finite set of linear features of the input image and then synthesize the output as a linear combination of finitely many basis images with nonlinear (scalar) coefficients.

\section{Application of Theorem~2 to artificial intelligence}
\label{app:ai}

Theorem~2 also applies broadly to AI settings where the inputs and outputs are naturally modeled as elements of function spaces.
For example, in operator learning one aims to learn maps between function spaces (e.g., mapping coefficients/forcing terms to PDE solutions), which can be viewed as continuous maps $f\colon K\to Y$ on compact data regimes \cite{kovachki_neuraloperator,li_fno,lu_deeponet}.
Theorem~2 gives an existence guarantee that such maps can be approximated uniformly on $K$ by finite-rank models built from finitely many linear measurements of the input signals and scalar nonlinearities.

More concretely, if $H=L^2(\Omega)$ and $K\subset H$ is a compact set of admissible inputs (fields, signals, contexts), then any continuous $f\colon K\to Y$ can be approximated by
\[
  g(x)=\sum_{j=1}^r y_j\,\zeta_j\bigl(\varphi_j(x)\bigr),
\]
which matches the common ``encode--nonlinearity--decode'' template: linear encoders $\varphi_j$ (features), scalar nonlinear mixing $\zeta_j$, and a linear decoder spanned by output atoms $y_j$.
This viewpoint helps unify a range of architectures (e.g., linear-feature front ends followed by MLPs, low-rank decoders, and neural-operator models) as instances of the same universal approximation principle on compact sets.

%\appendix

% \crefalias{section}{appendix} % uncomment if you are using cleveref

%\section{My Proof of Theorem 1}

%This is a boring technical proof.

%\section{My Proof of Theorem 2}

%This is a complete version of a proof sketched in the main text.

\end{document}